\newtheorem{proposition}{Proposition}
\newtheorem{remark}{Remark}
\DeclareMathOperator*{\argmin}{arg\,min}
\begin{document}    

\title{\LARGE \bf Modular and Parallelizable Multibody Physics Simulation via \\ Subsystem-Based ADMM}

\author{Jeongmin Lee, Minji Lee, and Dongjun Lee
\thanks{
This research was supported by the Industrial Strategic Technology Development Program (20001045) of the Ministry of Trade, Industry \& Energy (MOTIE) of Korea, the Engineering Research Center Program for Soft Robotics (2016R1A5A1938472), and the RS-2022-00144468 of the National Research Foundation (NRF) funded by the Ministry of Science and ICT (MSIT) of Korea. Corresponding author: Dongjun Lee.}
\thanks{The authors are with the Department of Mechanical \& Aerospace Engineering, IAMD and IER, Seoul National University, Seoul, Republic of Korea. \{ljmlgh,mingg8,djlee\}@snu.ac.kr.}
}
\maketitle

\begin{abstract}
In this paper, we present a new multibody physics simulation framework that utilizes the subsystem-based structure and the Alternating Direction Method of Multiplier (ADMM).
The major challenge in simulating complex high degree of freedom systems is a large number of coupled constraints and large-sized matrices. 
To address this challenge, we first split the multibody into several subsystems and reformulate the dynamics equation into a subsystem perspective based on the structure of their interconnection. 
Then we utilize ADMM with our novel subsystem-based variable splitting scheme to solve the equation, which allows parallelizable and modular architecture.
The resulting algorithm is fast, scalable, versatile, and converges well while maintaining solution consistency.
Several illustrative examples are implemented with performance evaluation results showing advantages over other state-of-the-art algorithms.
\end{abstract}

\IEEEpeerreviewmaketitle

\section{Introduction}

Physics simulation enables synthetic data acquisition in a virtual environment to reduce the cost, time, and risk of data-driven methods that are increasingly emerging in robotics \cite{agos19nmi,zeng20tro,ding20icra,lee20tro}.
Further, in terms of finding a solution to the modeled system dynamics equation, it can be directly utilized in various problems such as trajectory optimization \cite{mastalli20crocoddyl}, system identification \cite{carpentier21iden}, etc.
As such, the importance of simulation is increasingly being emphasized, with a plethora of open-source software \cite{RaiSim,bullet,mujoco,flex,sofa}.

One of the most important concerns in robotic simulation research is how to obtain data that is accurate and efficient in terms of computation time.
This is a challenging problem and implies the question of how to formulate the dynamics of systems, and which algorithms to use to solve them.
Since it includes many factors such as discrete-time integration, various types of constraints, friction, system-induced sparsity, numerical algorithms, etc., various methods have been proposed for decades.
However, simulation of a high degree of freedom (DOF) system with many constraints is still a difficult problem \cite{choi21simrobot}.
This is because, fundamentally, all system DOFs are dynamically coupled, so a constraint force acting on a part of the system in general affects the entire system. 
This leads many algorithms to use large-size matrix operations (e.g., factorization) or possibly excessive numerical iterations.

In this paper, we attempt to solve this challenge by developing a novel subsystem-based simulation approach, that is simple, modular, and suitable for parallelization while ensuring the solution consistency and accuracy.    
For this, we first split the multibody system into several subsystems and reformulate the conventional expressions of discrete-time constrained dynamics into a subsystem perspective.
Then inspired by the structure of the Alternating Direction Method of Multiplier (ADMM \cite{boyd11admm}), we present a novel variable splitting scheme and solution process on the reformulated dynamics equation.
This then reduces the solution process to iterations of 1) block-decomposed linear solving of the subsystem dynamics equation (allowing for complete parallelization) and 2) parallel resolution of all the constraint interfaces (with scalar operation only), ensuring low per-iteration computation time and scalability.
Moreover, our method can handle with various types of constraints and also exhibits stable convergence properties, rendering itself as an appealing alternative for robot simulation.
Several multibody simulation examples are then implemented and demonstrated to show the validity of our framework.

The conventional approach to dealing with constrained dynamics equations is applying pivoting algorithms \cite{llyod05icra} after formulating a linear complementarity problem \cite{potra97nd}.
However, since these direct methods require high computational complexity and polygonal friction cone approximation, iteration-based methods have been more widely used in recent studies.
One of the popular approaches is using Gauss-Seidel type iteration per constraint \cite{todorov14convex,macklin14unified,macklin16game,horak19ral}.
These methods scale well for particle-based systems, but not well for systems with generalized coordinate representation (e.g., robot joint angles) and complex internal constraints (e.g., finite element).
Several researches tackle this issue \cite{otaduy09cgf,daviet20tog,carpentier21rss} by taking an operator splitting type method.
However, their applicability to rigid-deformable objects with various constraint types is limited and they still have to deal with the full system size matrices.
Another direction is to apply a Newton-type iteration over the cost including the constraint \cite{macklin19tog,li20ipc,castro22convex}.
Despite their good convergence property, their second-order nature could be problematic for large-sized problems as they require multiple linear problem resolutions.

Our subsystem-based ADMM algorithm may be regarded as an opportunistic compromise between the two directions described above.
By properly separating primal-dual relationships based on subsystems, we circumvent the burdens of handling both with many constraints and large-sized matrices.
In this context, \cite{periet19tog,lee21icra,lee2021real} share some conceptual similarities with our framework proposed here. 
However, their applicability is much limited as compared to our framework, since 1) they need factorization to construct coupling interface equation, which is costly especially as the size of the subsystems grows, and 2) their constructed coupling dynamics is dense, therefore only a small number of inter-connection between subsystems is permitted for reasonable performance.
In contrast, by utilizing the structural peculiarity of ADMM, our proposed framework can handle all the constraints in a decoupled manner for each iteration phase, thereby not only substantially improving the algorithmic efficiency but also allowing for its extension to a wide range of multibody systems. 
We also note that \cite{daviet20tog,tasora21admm,overby17tvcg} employ ADMM structure in simulation. 
However, their full system level approaches still require large-sized matrix operations.     
In contrast, our subsystem-based variable splitting gives a rise to small-sized and parallelized structures, making our scheme much more efficient and scalable.

The rest of the paper is organized as follows. 
Some background materials about constrained dynamics simulation and ADMM will be explained in Sec.~\ref{sec-preliminary}.
Then our simulation framework using subsystem-based ADMM will be described in Sec.~\ref{sec-madmm}.
Some illustrative examples and performance evaluation will be presented in Sec.~\ref{sec-evaluation}.
Finally, discussions and concluding remarks are given in \ref{sec-conclusion}.

\section{Preliminary} \label{sec-preliminary}

\subsection{Constrained Dynamics}
Consider following continuous-time dynamics:
\begin{equation} 
    \begin{aligned}
        M(q)\ddot{q} + C(q,\dot{q})\dot{q} + d\psi^T = f + J(q)^T\lambda
    \end{aligned}
\end{equation}
where $q\in\mathbb{R}^n$ is the generalized coordinate variable of system, $M(q),C(q,\dot{q})\in\mathbb{R}^{n\times n}$ are the mass, Coriolis matrix, $d\psi^T\in\mathbb{R}^n$ is the potential action, $f\in\mathbb{R}^n$ is the external force, and $\lambda\in\mathbb{R}^{n_c}, J(q)\in\mathbb{R}^{n_c\times n}$ are the constraint impulse and Jacobian with $n,n_c$ being the system/constraint dimension.
The discretized version of the dynamics is
\begin{equation} \label{eq-ddyn}
    \begin{aligned}
        &M_k\frac{v_{k+1}-v_k}{t_k} + C_k v_k + d\psi_k^T = f_k + J_{k}^T\lambda_{k} \\
        &\hat{v}_k=\frac{v_k+v_{k+1}}{2}, \quad q_{k+1} \leftarrow \text{update}(q_k, \hat{v}_k, t_k)
    \end{aligned}
\end{equation}
where $k$ denotes the time step index, $M_k=M(q_k)$, $C_k=C(q_k,v_k)$, $t_k$ is the step size, and $v_k,\hat{v}_k\in\mathbb{R}^n$ are the current, representative velocity \cite{kim17ijrr} of each time step. Although we use the midpoint rule here, it can be transformed into other integration rules. From now on, time step index $k$ will be omitted for simplicity but note that all components are still time(step)-varying.

In this paper, we deal with the constraints at the velocity level as in many other works \cite{RaiSim,bullet,mujoco}, which is stable but is based on linearization.
Issues that may arise from linearization can be suppressed by adopting multiple-linearization as in \cite{daviet20tog} or re-linearization \cite{verschoor19collision}, and these will be integrated into our future implementation.
We classify the system constraints into three categories: soft, hard, and contact constraints:

\subsubsection{Soft constraint}
Soft constraints are originated from the elastic potential energy of the system (e.g., finite element).
If the $j$-th constraint is soft, impulse can be written as
\begin{align} \label{eq-scon}
    \lambda_j = -k_j (e_j + \alpha_j J_j\hat{v})
\end{align}
where $e_j\in\mathbb{R}$ and $J_j\in\mathbb{R}^{1\times n}$ are the ($t$-scaled) error and Jacobian for soft constraint, $k_j$ is the gain parameter, and $\alpha_j > 0$ is the variable that includes an implicit term with constraint-space damping. 
The value of $\alpha_j$ is associated with system energy behavior, see \cite{andrews17cgf,kim17ijrr} for more details.

\subsubsection{Hard constraint}
Hard constraints ensure that equations and inequalities for the system are strictly satisfied (e.g., joint limit), including holonomic and non-holonomic types. If the $j$-th constraint is hard, it has the form of
\begin{align} \label{eq-hcon}
    J_j\hat{v} + e_j \ge 0
\end{align}
where $e_j\in\mathbb{R}$ and $J_j\in\mathbb{R}^{1\times n}$ denote the error and Jacobian for hard constraint.
Here, the error can be determined by methods such as Baumgarte stabilization \cite{baumgarte72cm}.

\subsubsection{Contact constraint}
Contact condition is typically the most demanding type since it includes non-linear complementarity relation between primal (i.e., velocity) and dual (i.e., impulse) variables.
We take Signorini-Coulomb condition \cite{lee2022large}, which is the most universal expression for frictional contact.
If the $j$-th constraint is contact, the relation is
\begin{equation} \label{eq-scc}
    \begin{aligned}
    & 0 \le \lambda_{j,n} \perp J_{j,n}\hat{v} + e_{j,n} \ge 0 \\
    & 0 \le \delta_j \perp \mu_j\lambda_{j,n} - \| \lambda_{j,t} \| \ge 0 \\
    &\delta_j \lambda_{j,t} + \mu_j\lambda_{j,n} J_{j,t}\hat{v} = 0
    \end{aligned}
\end{equation}
where $\perp$ denotes complementarity, $e_{j,n}\in\mathbb{R}$ and $J_{j,n}\in\mathbb{R}^{1\times n}$ denote the error and Jacobian for contact normal, $J_{j,t}\in\mathbb{R}^{2\times n}$ is the Jacobian for contact tangential, and $\mu_j$ is the friction coefficient and $\delta_j$ is the auxiliary variable.
There are three situations induced by the condition - open ($\lambda_{j,n}=0$), stick ($\lambda_{j,n}>0, \delta_j=0$), and slip ($\lambda_{j,n}>0, \delta_j>0$). 

\subsection{Alternating Direction Method of Multiplier}

Alternating direction method of multiplier (ADMM \cite{boyd11admm}) is the method to solve the following optimization problem:
\begin{align*}
    \min_{x,z} f(x) + g(z) \quad \text{s.t.} \quad Px+Qz = r
\end{align*}
Based on the augmented Lagrangian defined as,
\begin{align*} 
    &\mathcal{L} = f(x) + g(z) + u^T(Px+Qz-r) + \frac{\beta}{2} \| Px+Qz - r\|^2
\end{align*}
where $u$ is the Lagrange multiplier and $\beta > 0$ is the penalty weight. ADMM iteratively performs alternating minimization of $\mathcal{L}$ with respect to each variable. The iteration process of ADMM can be summarized as follow:
\begin{align*}
    &x^{l+1} = \argmin_{x} \left( f(x) + \frac{\beta}{2} \| Px+Qz^l - r + \frac{1}{\beta}u^l\|^2 \right ) \\
    &z^{l+1} = \argmin_{z} \left( g(z) + \frac{\beta}{2} \| Px^{l+1}+Qz - r + \frac{1}{\beta}u^l\|^2 \right) \\
    &u^{l+1} = u^l + \beta(Px^{l+1}+Qz^{l+1}-r)
\end{align*}
where $l$ is the loop index. 
ADMM is known as robust, simple to implement, and able to attain independent resolution with respect to each variable \cite{boyd11admm,wang19kdd}.

\section{Simulation via Subsystem-Based ADMM} \label{sec-madmm}

\subsection{Subsystem Division} \label{subsec-division}


\begin{figure}[t] 
\centering    
    \begin{subfigure}{3.8cm} 
    \captionsetup{belowskip=2pt}
    \includegraphics[width=3.8cm]{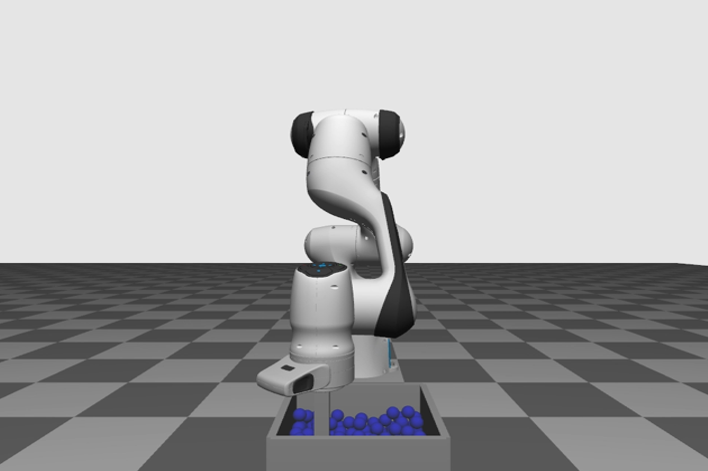} 
    \caption{Granular objects stirring}
    \label{fig-motstir}
    \end{subfigure}
    \begin{subfigure}{3.8cm}
    \captionsetup{belowskip=2pt}
    \includegraphics[width=3.8cm]{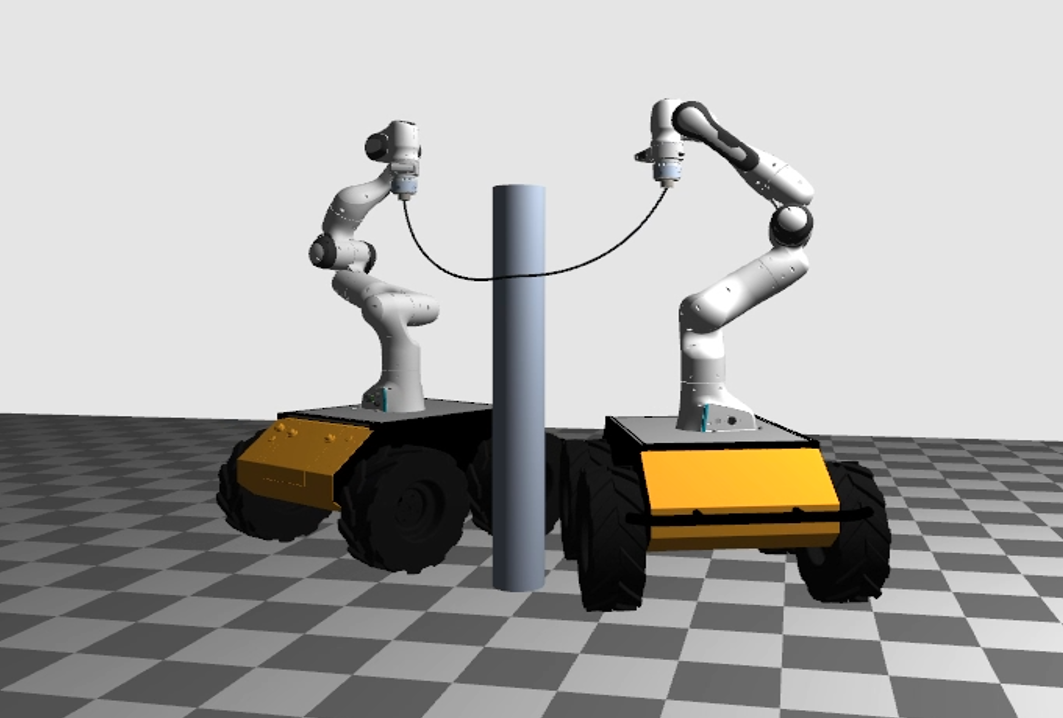} 
    \caption{Cable mobile manipulation}
    \label{fig-motcobot}
    \end{subfigure}
    \begin{subfigure}{3.8cm}
    \includegraphics[width=3.8cm]{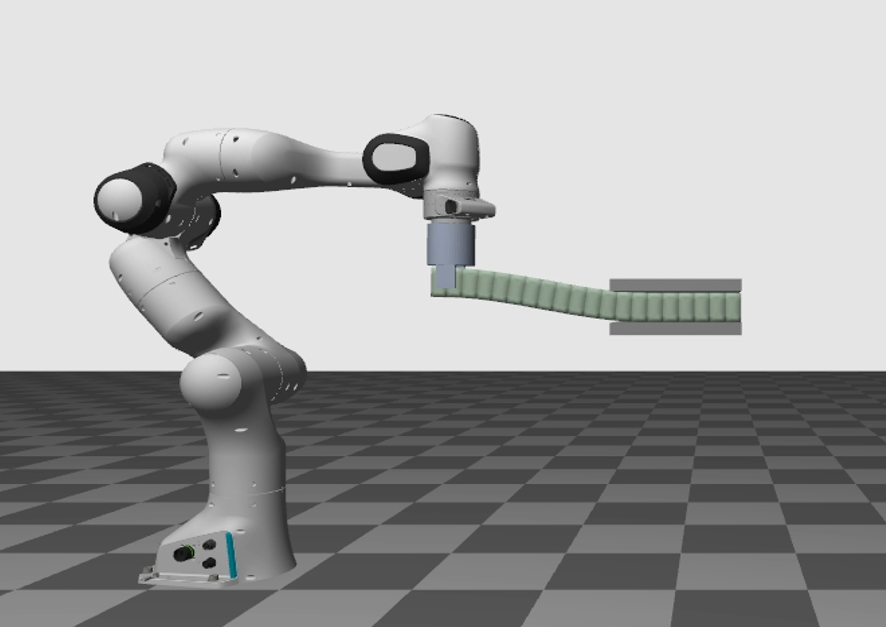} 
    \caption{Deformable body insertion}
    \label{fig-motbeam}
    \end{subfigure}
    \caption{Motivating examples and implementations of our subsystem-based ADMM framework.}
    \label{fig-motivation}
\end{figure}

Our approach starts by dividing the entire system into several subsystems.
See Fig.~\ref{fig-motivation} for our motivational examples.
We assume that objects in typical robotics simulation can be broadly classified into three main classes: rigid body, deformable body, and robot manipulator.
In many cases, each rigid body and manipulator is treated as a single subsystem (as in Fig.~\ref{fig-motstir}).
This is intuitive and allows for preserving modularity for each class (e.g., constant 6 DOF inertia for a rigid body, articulated structure of manipulator).
However, for the situations in which a large number of rigid bodies are connected through soft coupling (e.g., cable modeling as in Fig.~\ref{fig-motcobot}), we find that defining a new subsystem by assembling several rigid body instances can give better performance. 
In the case of a deformable body, its dimension is often so high to conveniently treat it as a single subsystem and causes an imbalance with other objects.
Thus, we split each deformable object into several pieces and consider each as a subsystem, while they are jointly connected using hard constraints (as in Fig.~\ref{fig-motbeam}).

\subsection{Subsystem-Based Dynamics Reformulation} \label{subsec-splitting}

Now consider that the whole system is divided as described in Sec.~\ref{subsec-division}.
If all the subsystems are completely independent (i.e., no coupling), we can formulate each subsystem dynamics using the structure of \eqref{eq-ddyn} and write in the following compressed form:
\begin{align} \label{eq-sdyn}
    &A_i\hat{v}_i = b_i + J_{in,i}^T\lambda_{in,i}
\end{align}
for $i=\left\{1,\cdots,N\right\}$ where $N$ is the number of subsystem, $A_i\in\mathbb{R}^{n_i\times n_i}, b_i\in\mathbb{R}^{n_i}$ are the subsystem dynamics matrices/vectors, and $\lambda_{in,i}\in\mathbb{R}^{n_{in,i}}, J_{in,i}\in\mathbb{R}^{n_{in,i}\times n_i}$ are the \textit{intra}-subsystem constraint impulse/Jacobian while $n_i,n_{in,i}$ are the dimension of subsystem/intra-subsystem constraint.
Here, each $A_i$ is a symmetric positive definite from the mass matrix and energy Hessian approximation \cite{macklin19tog,daviet20tog,lee2022large}.

\begin{remark}
Since \eqref{eq-scon} is in closed-form of $\hat{v}$, it can be directly included in $A_i,b_i$, or still be remained in $\lambda_{in,i}$ of \eqref{eq-sdyn}. 
Currently, this is optional, as both these schemes work fine in our framework.
\end{remark}

Now to take into account the \textit{coupling} constraints between the subsystems, we must add a coupling impulse and the dynamics of the entire system can be written as
\begin{equation} \label{eq-sdyn-total1}
    \begin{bmatrix}
    A_1 & & \\
    & \ddots & \\
    & & A_{N}
    \end{bmatrix}
    \begin{bmatrix}
    \hat{v}_1 \\ \vdots \\ \hat{v}_{N}
    \end{bmatrix} = 
    \begin{bmatrix}
    b_1 \\ \vdots \\ b_{N}
    \end{bmatrix} + 
    \begin{bmatrix}
    J_{in,1}^T\lambda_{in,1} \\ \vdots \\ J_{in,N}^T\lambda_{in,N}
    \end{bmatrix} + J_{cp}^T\lambda_{cp}
\end{equation}
where $\lambda_{cp}\in\mathbb{R}^{n_{cp}}$ and $J_{cp}\in\mathbb{R}^{n_{cp}\times n}$ are the inter-subsystem coupling impulse and Jacobian. Then \eqref{eq-sdyn-total1} can be rewritten as
\begin{align} \label{eq-sdyn-total2}
    A \hat{v} = b + J_{in}^T\lambda_{in} + J_{cp}^T\lambda_{cp}
\end{align}
Note that this new subsystem-based dynamics formulation \eqref{eq-sdyn-total2} does not relax any physical condition, while still allowing to utilize the block-diagonal structure of $A$, even for complex multibody scenarios.

\subsection{ADMM-Based Solver}
To solve \eqref{eq-sdyn-total2} using ADMM, we start by defining the following function:
\begin{align} \label{eq-admm-f}
f_i(\hat{v}_i,x_i) = \frac{1}{2} \hat{v}_i^T A_i \hat{v}_i - b_i^T\hat{v}_i + \mathcal{I}(J_{c,i}\hat{v}_i=x_i) 
\end{align}
where $x_i\in\mathbb{R}^{n_{c,i}}$ is the auxiliary variable, $\mathcal{I}$ is the indicator function, and $J_{c,i}\in\mathbb{R}^{n_{c,i}\times n_i}$ is the row stack of $J_{in,i}$ and $J_{cp,i}$ while $n_{c,i}$ is the summation of intra- and inter-subsystem constraint dimension.
The function \eqref{eq-admm-f} is defined independently for each subsystem and includes the cost for the dynamics ($A_i,b_i$) and the mapping into the constraint space ($J_i\hat{v}_i=x_i$), but does not yet concern with constraint satisfaction.
For the constraint satisfaction, we define the following function:
\begin{align} \label{eq-admm-g}
g(z) = g(z_1,z_2,\cdots,z_{n_s}) = \sum_{j=1}^{n_{in}+n_{cp}} g_j
\end{align}
where each $z_i\in\mathbb{R}^{n_{c,i}}$ is actually interpreted as a duplicated variable of $x_i$ for the $g$ function to enforce the constraints.
The function $g$ can be better understood in constraint-wise, i.e., summation of $g_j$ where $j$ index denotes each constraint.
Each $g_j$ is a function of only the variables corresponding to the $j$-th constraint i.e.,
\begin{align*}
    \{ z_{i,j} ~\vert~  i\in \mathcal{S}_j \}
\end{align*}
where $z_{i,j}$ is the segment of $z_i$ corresponds to the $j$-th constraint and $\mathcal{S}_j$ is the set of subsystem indexes related to the $j$-th constraint.
For the intra-subsystem constraint, the cardinality of $S_j$ (i.e., $\vert \mathcal{S}_j \vert$) is $1$; if the constraint is inter-subsystem coupling, then $\vert \mathcal{S}_j \vert \ge 2$.
Based on the functions \eqref{eq-admm-f} and \eqref{eq-admm-g} defined above, solving \eqref{eq-sdyn-total2} can be reformulated as the following optimization problem:
\begin{equation} \label{eq-admm-sim}
\begin{aligned}
    \min_{\hat{v},x,z}
    &\sum_{i=1}^{n_s}f_i(\hat{v}_i,x_i) + g(z)\\
    \text{s.t.}& \quad x = z
\end{aligned}
\end{equation}
Now applying ADMM iteration on \eqref{eq-admm-sim}, we can obtain the following iteration sequence:
\begin{align}
    &\hat{v}_{i}^{l+1},x_i^{l+1} = \argmin_{\hat{v}_i,x_i} \left( f_i + \frac{\beta_i}{2} \| x_i-z_i^l + \frac{1}{\beta_i}u_i^l\|^2 \right ) \label{eq-admm-1} \\
    &z^{l+1} = \argmin_{z} \left( g + \sum_{i} \frac{\beta_i}{2} \| x_i^{l+1} - z_i + \frac{1}{\beta_i}u_i^l\|^2 \right) \label{eq-admm-2} \\
    &u_i^{l+1} = u_i^l + \beta_i(x_i^{l+1}-z_i^{l+1}) \label{eq-admm-3}
\end{align}
where \eqref{eq-admm-1} and \eqref{eq-admm-3} are actually computed $\forall i$ in parallel and the weight parameter $\beta_i\in\mathbb{R}$ is utilized for each subsystem for better numerical conditions (see also Sec.~\ref{subsection-choicebeta}).
Note that the fixed-point of above iteration will satisfy $\forall i~J_{c,i}\hat{v}_i=x_i=z_i$, therefore it will exactly satisfy \eqref{eq-sdyn-total2} and all constraints (i.e., \eqref{eq-scon}, \eqref{eq-hcon}, \eqref{eq-scc} $\forall j$) without any relaxation.
Since Lagrange multiplier update \eqref{eq-admm-3} is a trivial step, the main consideration here is how to solve \eqref{eq-admm-1} and \eqref{eq-admm-2} in an efficient manner.

\subsubsection{Solving \eqref{eq-admm-1}}

By using an auxiliary variable $x_i$, it can be seen that the dimension of the problem \eqref{eq-admm-1} is expanded to $\text{dim}(\hat{v}_i)+\text{dim}(x_i)$ from the original subsystem dimension $\text{dim}(\hat{v}_i)$.
Consider the following KKT conditions of \eqref{eq-admm-1}:
\begin{align*}
    &A_i\hat{v}_i^{l+1} = b_i + J_{c,i}^T\gamma \\ 
    &\beta_i x_i^{l+1} = \beta_i z_i^l - u_i^l - \gamma \\
    &J_{c,i}\hat{v}_i^{l+1} = x_i^{l+1}
\end{align*}
where $\gamma$ is the Lagrange multiplier. Here, combining these three equations, we can obtain $\hat{v}_i$ by solving the following linear equation:
\begin{align} \label{eq-admm-1i}
    (A_i+\beta_i J_{c,i}^T J_{c,i})\hat{v}_i^{l+1} = b_i + J_{c,i}^T(\beta_i z_i^l - u_i^l)
\end{align}
where the equation is always solvable from the positive definite property of the left-most matrix.
By this procedure, the problem size can be brought back to $\text{dim}(\hat{v}_i)$, therefore the concern about increased computation time due to the inclusion of $x_i$ can be obliviated.
Note that this trick is not possible if we attempt to directly solve the minimization of non-smooth function $f_i$. This rather becomes possible as \eqref{eq-admm-1} in ADMM procedure uses the quadratic augmented term with scalar weight.
In conclusion, the process for solving \eqref{eq-admm-1} is simply obtaining a \textit{subsystem size} linear solution for each subsystem in \textit{parallel}.

\subsubsection{Solving \eqref{eq-admm-2}} \label{subsubsec-admm-2}

As described earlier, $g$ is the summation of all the $g_j$ defined for each constraint.
Accordingly, the problem \eqref{eq-admm-2} can be independently decomposed according to all the constraints as
\begin{align} \label{eq-admm-2j}
    \min_{\underset{i\in \mathcal{S}_j}{z_{i,j}}} 
    \left( g_j + \sum_{i\in \mathcal{S}_j} \frac{\beta_i}{2} \| x_{i,j}^{l+1} - z_{i,j} + \frac{1}{\beta_i}u_{i,j}^l \|^2 \right )
\end{align}
therefore can be solved $\forall j$ in parallel.
Now consider solving \eqref{eq-admm-2j} for bilateral case (i.e., $\vert \mathcal{S}_j \vert = 2$), which is one of the most frequently appearing in practice.
For simplicity, let us assume $\mathcal{S}_j= \left\{ 1,2 \right\}$.

\textbf{Hard constraint:} 
As $z_i$ is the value already mapped into constraint space, $g_j$ only needs to enforce the constraint on $z_{1,j}+z_{2,j}$.
So in the case of hard constraint,
\begin{align} \label{eq-gj-hard}
    g_j = \mathcal{I}(z_{1,j}+z_{2,j}+e_j\ge 0)
\end{align}
and \eqref{eq-gj-hard} can be interpreted as constraint impulse $\lambda_j$ acting on the linear solution of the quadratic terms in \eqref{eq-admm-2j} i.e.,
\begin{align} \label{eq-surrogate}
\begin{split}
    &\beta_1 z_{1,j} = \underbrace{\beta_1 x_{1,j}^{l+1} + u_{1,j}^l}_{:=y_{1,j}^{l+1}} + \lambda_j \\
    &\beta_2 z_{2,j} = \underbrace{\beta_2 x_{2,j}^{l+1} + u_{2,j}^l}_{:=y_{2,j}^{l+1}} + \lambda_j
\end{split}
\end{align}
where we introduce the new variable $y$ for conciseness.
We can see from the structure of \eqref{eq-admm-2j} that the relation \eqref{eq-surrogate} is \textit{matrix-free}, and only consists of scalar weights. 
Thanks to this property, $\lambda_j$ can be computed in a very simple manner
as we combine \eqref{eq-surrogate} with the following complementarity condition:
\begin{align} \label{eq-hard12}
\begin{split}
    &0 \le \lambda_j \perp z_{1,j} + z_{2,j} + e_j \ge 0
\end{split}
\end{align}
the solution for $\lambda_j$ can be obtained with the simple scalar operation:
\begin{align*}
    &\lambda_j = \Pi_{\ge 0}\left (-\frac{\beta_1^{-1}y_{1,j}^{l+1} + \beta_2^{-1} y_{2,j}^{l+1} + e_j}{\beta_1^{-1}+\beta_2^{-1}} \right )
\end{align*}
where $\Pi_{\ge 0}$ denotes the projection on positive set.

The matrix-free relation \eqref{eq-surrogate} is the same for other types of constraints (soft, contact), while \eqref{eq-hard12} to be replaced with other relation.

\textbf{Soft constraint:} 
From the structure of \eqref{eq-scon},
\begin{align} \label{eq-soft12}
    \lambda_j = -k_j(e_j+\alpha_j (z_{1,j}+ z_{2,j}))
\end{align}
has to be satisfied. 
Then by substituting \eqref{eq-surrogate} to \eqref{eq-soft12}, we can obtain the impulse solution as
\begin{align*}
    \lambda_j = -\frac{k_j(e_j + \alpha_j(\beta_1^{-1} y^{l+1}_{1,j} + \beta_2^{-1} y^{l+1}_{2,j})}{(1+(\beta_1^{-1}+\beta_2^{-1})\alpha_j k_j)}
\end{align*}
which is also very simple to compute.

\textbf{Contact constraint:} 
Here the relation between $z_{1,j}+z_{2,j}$ and $\lambda_j$ must follow \eqref{eq-scc}, therefore
\begin{align} \label{eq-contact12}
    \begin{split}
    & 0 \le \lambda_{j,n} \perp z_{1,j,n}+z_{2,j,n} + e_{j,n} \ge 0 \\
    & 0 \le \delta_j \perp \mu\lambda_{j,n} - \| \lambda_{j,t} \| \ge 0 \\
    &\delta \lambda_{j,t} + \mu\lambda_{j,n} (z_{1,j,t}+z_{2,j,t}) = 0
    \end{split}
\end{align}
Despite the complexity of \eqref{eq-contact12}, solution can be easily obtained from the simple scalar structure of \eqref{eq-surrogate}:
\begin{align*}
    \lambda_j = \Pi_{\mathcal{C}}\left (-\frac{\beta_1^{-1}y_{1,j}^{l+1} + \beta_2^{-1} y_{2,j}^{l+1} + e_j}{\beta_1^{-1}+\beta_2^{-1}} \right )
\end{align*}
where $\Pi_{\mathcal{C}}$ denotes the projection on the friction cone.
The process can be done through a few algebraic operations, while respecting all contact conditions \cite{lee2022large}.

Although we explain the process only for the bilateral case, it can be shown straightforwardly that such a simple solution form can be derived for other cases as well.

\begin{algorithm} [t]
\caption{Simulation via Subsystem-Based ADMM} 
\label{alg1}
\begin{algorithmic}[1] 
\State Subsystem division for given multibody (Sec.~\ref{subsec-division})
\While{simulation}
\State $\forall i$ construct $A_i,b_i$ in parallel
\State $\forall j$ construct $e_j,J_j$ in parallel
\State $\forall i$ factorize $A_i+\beta_iJ_{c,i}^TJ_{c,i}$ in parallel
\While{loop}
\State $\forall i$ update $\hat{v}_{i}^{l+1}$ from \eqref{eq-admm-1i} in parallel
\State compute residual $\theta$ from \eqref{eq-residual}
\If{$\theta < \theta_{th}$ or $l=l_{max}$} 
\State \textbf{break}
\EndIf
\State $\forall j$ update $z_j^{l+1}$ from \eqref{eq-admm-2j} in parallel
\State $\forall i$ update $u^{l+1}$ from \eqref{eq-admm-3} in parallel
\State $l\leftarrow l+1$
\EndWhile
\State{update each subsystem state using $\hat{v}_i^{l+1}$}
\EndWhile
\end{algorithmic}
\end{algorithm}

\subsection{Convergence}

It can be easily verified that each $f_i$ and $g_j$ for hard and soft constraints is convex in our formulation \eqref{eq-admm-sim}.
For contact conditions, $g_j$ may not be convex, but can be convexified by adopting the relaxed convex model \cite{todorov14convex}. 
In such cases, our method can guarantee convergence \cite{boyd11admm}.
Although we have not encountered the convergence issue associated with non-convexity of \eqref{eq-scc}, a more thorough analysis will be left for future work.

\subsubsection{Residual}
Originally, our process \eqref{eq-admm-1}, \eqref{eq-admm-2}, \eqref{eq-admm-3} is the iteration of $(\hat{v},x,z,u)$ and both primal and dual residual \cite{boyd11admm} are required to check the condition to terminate the iteration.
Instead, for our framework, we use the variable $y$ in \eqref{eq-surrogate} to define the residual as
\begin{align} \label{eq-residual}
    \theta = \sum_{i=1}^{n_s} \| y_i^{l+1}-y_i^l \|^2
\end{align}
where $\theta$ is the residual value.
This means that the iteration can be reinterpreted in terms of the lower-dimensional variable $y$, and the process of calculating the residuals can be more concise.
The following proposition provides the rationale of the statement:
\begin{proposition}
$(\hat{v}^{l+1},x^{l+1},z^{l+1},u^{l+1})$ is the fixed-point of the iteration \eqref{eq-admm-1}, \eqref{eq-admm-2} and \eqref{eq-admm-3}, if and only if $\theta = 0$.  
\end{proposition}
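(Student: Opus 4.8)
The plan is to show that the scalar residual $\theta$ in \eqref{eq-residual} vanishes exactly when the iteration is stationary, by first re-expressing it in terms of the quantities that are genuinely propagated across outer-loop passes. The only state that carries from one pass to the next is the pair $(z,u)$: the variables $(\hat{v}^{l+1},x^{l+1})$ are recomputed from $(z^l,u^l)$ through \eqref{eq-admm-1} (equivalently the subsystem solve \eqref{eq-admm-1i}), and then $z^{l+1},u^{l+1}$ follow from \eqref{eq-admm-2},\eqref{eq-admm-3}. I would therefore first reduce the statement by noting that $(\hat{v}^{l+1},x^{l+1},z^{l+1},u^{l+1})$ is a fixed point of the iteration if and only if the carried state is unchanged, i.e. $z^{l+1}=z^l$ and $u^{l+1}=u^l$, since the slaved quantities $\hat{v}$ and $x$ are then reproduced identically on the next pass.

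The first substantive step is the identity
\begin{align*}
y_i^{l+1}-y_i^l = \beta_i\bigl(x_i^{l+1}-x_i^l\bigr)+\bigl(u_i^l-u_i^{l-1}\bigr)=\beta_i\bigl(x_i^{l+1}-z_i^l\bigr),
\end{align*}
which follows by substituting the previous-pass dual update \eqref{eq-admm-3}, namely $u_i^l-u_i^{l-1}=\beta_i(x_i^l-z_i^l)$, into the definition $y_i^{l+1}=\beta_i x_i^{l+1}+u_i^l$ of \eqref{eq-surrogate}. Summing gives $\theta=\sum_i\beta_i^2\|x_i^{l+1}-z_i^l\|^2$, so $\theta=0$ if and only if $x_i^{l+1}=z_i^l$ for every $i$. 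The forward implication is then immediate: at a fixed point $u^{l+1}=u^l$ forces $x^{l+1}-z^{l+1}=0$ via \eqref{eq-admm-3}, and combined with $z^{l+1}=z^l$ this yields $x^{l+1}=z^l$, hence $\theta=0$.

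For the converse, which I expect to be the crux, I would start from $\theta=0$, i.e. $x^{l+1}=z^l$, and show the carried state cannot move. The mechanism is the subgradient optimality condition of the $z$-subproblem \eqref{eq-admm-2},\eqref{eq-admm-2j}: a minimizer $z^{l+1}$ is characterized by $u_i^l+\beta_i(x_i^{l+1}-z_i^{l+1})\in\partial_{z_i}g(z^{l+1})$ for each $i$. The observation I would exploit is that the same condition one pass earlier, together with the dual update, already delivers $u_i^l\in\partial_{z_i}g(z^l)$; hence under $x_i^{l+1}=z_i^l$ the candidate $z^{l+1}=z^l$ satisfies the optimality inclusion exactly. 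Since the $z$-objective is strongly convex (the quadratic penalty carries weight $\beta_i>0$ and each $g_j$ is convex, using the convexified contact model \cite{todorov14convex} where needed), the minimizer is unique, so $z^{l+1}=z^l$; the dual update \eqref{eq-admm-3} then returns $u^{l+1}=u^l$, making the carried state stationary and the full tuple a fixed point.

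The main obstacle is exactly this inherited inclusion $u_i^l\in\partial_{z_i}g(z^l)$: it is what allows a single-pass residual to certify a true fixed point rather than merely a small step, and it rests on $z^l$ having been produced by an exact $z$-minimization on the previous pass. I would therefore state the result for $l\ge 1$ (or assume a consistent initialization of $z^0,u^0$) so that the previous-pass optimality is available, and I would flag the convexity of each $g_j$ as the hypothesis under which both the subgradient characterization and uniqueness hold; the bilateral scalar relations in \eqref{eq-surrogate}--\eqref{eq-contact12} can then be used to verify the inclusion concretely for the hard, soft, and contact cases.
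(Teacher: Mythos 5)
Your proof is correct, but the mechanism you use for the nontrivial $(\Leftarrow)$ direction is genuinely different from the paper's. The paper's argument is that the whole iteration factors through $y$: the $z$-subproblem \eqref{eq-admm-2j} depends on $(x^{l+1},u^l)$ only through $y_{i,j}^{l+1}=\beta_i x_{i,j}^{l+1}+u_{i,j}^l$, and each $\lambda_j$ (hence each $z_{i,j}^{l+1}$ via \eqref{eq-surrogate}) is \emph{uniquely determined} by $y^{l+1}$ through the closed-form scalar formulas; therefore $y^{l+1}=y^l$ forces $z^{l+1}=z^l$, then $u^{l+1}=y^{l+1}-\beta z^{l+1}=u^l$, and $\hat{v},x$ are reproduced by \eqref{eq-admm-1i}. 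You instead rewrite $\theta=\sum_i\beta_i^2\|x_i^{l+1}-z_i^l\|^2$ (a correct identity, valid for $l\ge 1$) and close the loop with the inherited subgradient inclusion $u_i^l\in\partial_{z_i}g(z^l)$ together with strong convexity and uniqueness of the $z$-minimizer. Both arguments are sound, and both share the same skeleton of reducing the fixed point to stationarity of the carried state $(z,u)$. The paper's route is shorter and leans on the concrete single-valuedness of the constraint resolution, so it does not need to invoke convexity of $g_j$ explicitly --- which matters because the contact $g_j$ is not convex without the relaxation the paper mentions; your route is the standard ADMM-theoretic one, and its value is that it makes explicit the hypotheses the paper leaves implicit ($l\ge 1$ or a consistent initialization, convexity of each $g_j$ for the subgradient characterization) and that it reinterprets the paper's ad hoc residual \eqref{eq-residual} as a $\beta$-weighted half-step primal residual $x^{l+1}-z^l$, connecting it to the usual ADMM primal/dual stopping criteria.
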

\begin{proof}
$\left( \Rightarrow \right)$ This is trivial. $\left( \Leftarrow \right)$ As $\theta=0$ denotes $\forall y_i^{l+1}=y_i^l$, we can find that $z^{l+1}=z^l$ holds as $\forall\lambda_j$ are uniquely determined from $y$.
Then as \eqref{eq-admm-3} is equivalent to $u_i^{l+1}=y_i^{l+1}-z_i^{l+1}$, $u^{l+1}=u^l$ also holds.
Finally, $\hat{v}$ is determined from $z$ and $u$ \eqref{eq-admm-1i}, so we can conclude that the set value is in fixed-point of the iteration.
\end{proof}

\subsubsection{Choice of $\beta$} \label{subsection-choicebeta}

We find that iteration has stable convergence regardless of $\beta$, but the value of $\beta$ affects the convergence rate.
We empirically confirm that the following $\beta$ setting exhibits good performance:
\begin{align}
    \forall\beta_i=\frac{\text{Tr}\left ( A_i \right )}{\text{Tr}\left (J_{c,i}^T J_{c,i}\right )}
\end{align}
which suggests a balanced weight between dynamics-related term $A_i$ and constraint-related term $J_{c,i}^T J_{c,i}$.
A more in-depth theoretical analysis of the strategy will be discussed in future work.

\subsection{Summary}

Our physics simulation framework via subsystem-based ADMM is summarized in Alg.~1.
As described earlier, the major part of the procedure is \textit{subsystem-wise} parallel solving of \eqref{eq-admm-1i} (line 7) and \textit{constraint-wise} parallel solving of \eqref{eq-admm-2j}  (line 12). 
From these characteristics, the computational complexity of our algorithm is at least linear: $\mathcal{O}(n_s+n_{in}+n_{cp})$. 
If parallelization is taken into account, it will be lower.

\section{Examples and Evaluations} \label{sec-evaluation}

We use an Intel Core i7-8565 CPU 1.80GHz (Quad-Core), OpenGL as a rendering tool, C++ Eigen as a matrix computation library, and C++ OpenMP as a parallelization library in our implementation.
Time step is set to $10~\rm{ms}$ for all examples.
See also our supplemental video.

\subsection{Scenarios}
We implement three high-DOF multibody manipulation scenarios.
In general, they consist of a combination of high-gain controlled robotic arms and lightweight objects with multi-type constraints, resulting in numerically challenging situations.
We employ Franka Emika panda \cite{franka} as a robot arm and Husky \cite{husky} as a ground vehicle.

\subsubsection{Granular object stirring}
The example is illustrated in Fig.~\ref{fig-motstir}: the robot arm uses an end effector to stir the granular material contained in the box.
The granular material consists of a total of $216$ spheres with a radius of $1~\rm{cm}$ and a weight of $4~\rm{g}$.
The total system dimension is $1303$, and since each rigid body and robot is treated as a subsystem, there are a total of $217$ subsystems.

\subsubsection{Collaborative cable manipulation}
The example is illustrated in Fig.~\ref{fig-motcobot}: two mobile manipulator consist of a ground vehicle and a robot arm are transporting and winding a flexible cable.
Cable is modeled by $640$ rigid bodies and soft constraint from Cosserat model, with length $1.2~\rm{m}$, diameter $8~\rm{mm}$, Young modulus $0.1~\rm{MPa}$, and Poisson ratio $0.49$.
Each mobile manipulator is modeled as $10$-DOF system while its movement constrained by non-holonomic constraint (no-slip).
Total system dimension is $3840$, and we treat each mobile manipulator and $4$ cable segments as a subsystem, making a total of $162$ subsystems.

\subsubsection{FEM beam insertion}
The example is illustrated in Fig.~\ref{fig-motbeam}: the robot arm inserts the deformable beam modeled with co-rotational FEM through narrow gap.
The size of beam is $0.05\times0.05\times0.5~\rm{m}$, with a Young modulus $10~\rm{MPa}$ and a Poisson ratio $0.45$.
The FEM model consists of $1591$ nodes and $6347$ tetrahedral elements, therefore total dimension is $4780$.
We divide the model into $20$ subsystems so the entire system consists of a total of $21$ subsystems including the manipulator.

\begin{table*}[t]
\small
\centering
\renewcommand{\arraystretch}{1.3}{
\resizebox{17.7cm}{!}{
\begin{tabular}{|c|c|c|c|c|c|c|c|c|c|c|c|c|c|c|c|c|}
\hline
\multicolumn{2}{|c|}{Solver} & \multicolumn{3}{c|}{PGS} & \multicolumn{3}{c|}{PJ} & \multicolumn{3}{c|}{FADMM} & \multicolumn{3}{c|}{NNewton} & \multicolumn{3}{c|}{\textbf{SubADMM}} \\
\hline
\multicolumn{2}{|c|}{Iteration} &
30 & 60 & 90 & 
30 & 60 & 90 & 
30 & 60 & 90 & 
3 & 6 & 9 & 
30 & 60 & 90  \\
\hline
\multirow{2}{*}{Stir} & AT 
& $14.50$ & $23.91$ & $32.27$ 
& $3.235$ & $5.688$ & $9.425$
& $28.41$ & $41.40$ & $56.95$ 
& $24.35$ & $46.99$ & $75.44$ 
& $3.489$ & $5.940$ & $8.705$ \\
\cline{2-17} & AA
& $4.427$ & $4.928$ & $5.248$
& $3.033$ & $3.349$ & $3.562$
& $4.107$ & $4.644$ & $5.009$
& $3.565$ & $4.429$ & $5.324$ 
& $4.069$ & $4.579$ & $5.023$ \\
\hline
\multirow{2}{*}{Cable} & AT 
& $48.08$ & $59.30$ & $72.96$ 
& - & - & -   
& $16.74$ & $23.52$ & $32.96$ 
& $43.35$ & $87.98$ & $132.7$ 
& $2.288$ & $4.285$ & $6.402$ \\ 
\cline{2-17} & AA 
& $4.141$ & $4.860$ & $5.404$ 
& - & - & -  
& $4.189$ & $4.634$ & $4.910$ 
& $3.270$ & $4.454$ & $5.278$  
& $4.344$ & $4.984$ & $5.222$ \\ 
\hline 
\multirow{2}{*}{Beam} & AT 
& $231.4$ & $241.2$ & $251.2$ 
& - & - & -
& $50.33$ & $92.11$ & $130.3$
& $188.5$ & $360.2$ & $525.7$
& $13.41$ & $24.67$ & $35.50$  \\
\cline{2-17} & AA 
& $3.895$ & $4.194$ & $4.255$
& - & - & - 
& $4.220$ & $4.478$ & $4.756$ 
& $2.445$ & $3.494$ & $4.945$
& $4.326$ & $4.743$ & $4.925$ \\
\hline 
\end{tabular}
}
}   
\caption{Evaluation results for various solvers. 
AT: average compuatation time (\rm{ms}), AA: average accuracy (constraint error value converted using $-\log(\cdot)$ before averaged, therefore bigger is better).
Unmarked values (-) means that the simulation fails to run successfully (e.g., significant penetration).}
\label{table-result}
\end{table*}

\subsection{Baselines}

We implement the following algorithms for performance comparison, with our method being denoted as SubADMM.

\subsubsection{Projected Gauss-Seidel (PGS)}
PGS is a representative algorithm in robotics and graphics fields \cite{todorov14convex,macklin14unified,macklin16game,horak19ral} and software \cite{mujoco,bullet,flex}. 
We implement an algorithm with conjugate gradient-based acceleration to improve its performance.

\subsubsection{Projected Jacobi (PJ)}
PJ is similar to PGS, but they do not solve constraints sequentially, but rather solve and update them in parallel at once.

\subsubsection{Full ADMM (FADMM)}
State-of-the-art implementations of ADMM algorithms \cite{stellato20osqp} can be used to solve physics simulation, which is specified in \cite{tasora21admm}.
The main difference with our algorithm is that they require solving of the full-system size matrix for each iteration.

\subsubsection{Nonsmooth Newton (NNewton)}
We also implement a recently proposed algorithm that transform the constraints into non-smooth function and solve it using Newton iteration. 
We refer \cite{macklin19tog,andrews22course} for details.

\subsection{Performance Index}

We apply the same number of iteration ($30,60,90$) for all algorithms except NNewton and measure the average solver computation time and constraint error norm from the simulation results.
In the case of NNewton, considering its second-order nature (cost per iteration is high but uses fewer iterations), the number of iterations is reduced by $1/10$ (i.e., $3,6,9$).
Constraint error for contact is calculated using Fischer-Burmeister function \cite{macklin19tog}.

\subsection{Results}

Evaluation results are summarized in Table~\ref{table-result}.
For graunlar object strring scenario, PJ and SubADMM shows the fastest computation speed, and this is due to their structure suitable for parallelization.
However, constraint error of PJ is signifcantly higher than SubADMM.
This reflects the unstable and slow convergence of the Jacobi-style iteration. 
On the other hand, SubADMM shows comparable error with other methods and shows its validity in terms of accuracy.
In the case of the cable and beam scenario, the computation performance of PGS and PJ becomes lower as the Delassus operator assembly is more complicated.
As such, FADMM outperforms them, yet SubADMM is still the fastest.
This is due to our special structure, which, as mentioned earlier, only requires parallelized resolution of the subsystem matrices without dealing with large-sized matrices.
In the similar vein, SubADMM also has an efficiency advantage over NNewton.
Algorithms other than PJ showed valid accuracies, while PJ failed to generate an adequate simulation results.
In summary, the results demonstrate all of the methodologically described advantages of SubADMM: 1) it avoids burdens on both many constraints and large-sized matrices, and 2) it does not use certain approximations on the model and has a good convergence property.

\subsection{Scalability}


\begin{figure}[t] 
\centering    
    \begin{subfigure}{4.0cm} 
    \includegraphics[width=4.0cm]{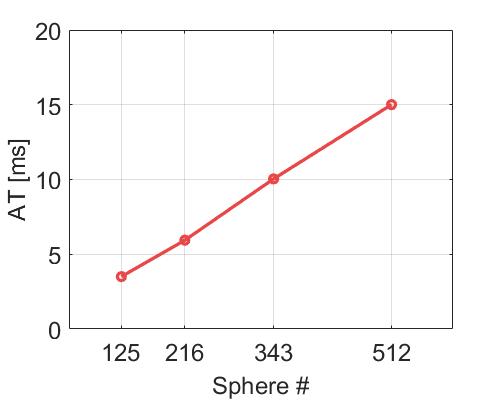} 
    \caption{Stir}
    \end{subfigure}
    \begin{subfigure}{4.0cm}
    \includegraphics[width=4.0cm]{Figure/scalestir.jpg} 
    \caption{Stir}
    \end{subfigure}
    \caption{Scalablity test results of SubADMM.}
    \label{fig-scalablity}
\end{figure}

To precisely evaluate the scalability of our method, we measure the computation time (iteration: $60$) by increasing the number of spheres in the stir scenario and the number of segments in the cable scenario.
Fig.~\ref{fig-scalablity} shows linear complexity of SubADMM (R-squared value: $0.9993$, $0.9998$).

\section{Discussions and Conclusions} \label{sec-conclusion}

In this paper, we present a new physics simulation framework based on subsystem-based ADMM.
Our approaches combines a novel subsystem-based formulation \eqref{eq-sdyn-total1} and operator splitting \eqref{eq-admm-f} and \eqref{eq-admm-g}, thereby achieve parallelizable and modular architecture for general multibody dynamics.
Several examples are implemented and evaluations show the advantages of our framework against state-of-the-art algorithms.
We believe that a generic implementation (similar to the open source form) will make a good contribution to the robotics community.
We also believe that our work can be extended to the area of optimal control by exploiting the coupled structure of the large-size optimization problem (e.g., time correlation).
Finally, similar to typical ADMM, the convergence property of our algorithm is stable but still linear.
Therefore, combination with second-order acceleration schemes will be a promising research direction.

\newpage
\bibliographystyle{unsrt}
\bibliography{reference}

\end{document}